\newtheorem{theorem}{Theorem}
\newtheorem{proposition}[theorem]{Proposition}
\theoremstyle{definition}
\newtheorem{definition}[theorem]{Definition}
\newtheorem{remark}[theorem]{Remark}
\newtheorem{example}[theorem]{Example}
\titlespacing\section{0pt}{6pt plus 4pt minus 2pt}{2pt plus 2pt minus 2pt}
\titlespacing\subsection{0pt}{6pt plus 4pt minus 2pt}{2pt plus 2pt minus 2pt}
\titlespacing\subsubsection{0pt}{6pt plus 4pt minus 2pt}{2pt plus 2pt minus 2pt}
\def\thm@space@setup{\thm@preskip=1pt
\thm@postskip=0pt}
\begin{document}

\title{The Role of Neural Network Activation Functions}

\author{Rahul~Parhi
        and~Robert~D.~Nowak,~\IEEEmembership{Fellow,~IEEE}%
        \thanks{This work is partially supported by AFOSR/AFRL grant FA9550-18-1-0166 and the NSF Research Traineeship (NRT) grant 1545481.}
        \thanks{The authors are with the Department of Electrical and Computer Engineering, University of Wisconsin--Madison,  Madison, WI 53706 (e-mail: rahul@ece.wisc.edu; rdnowak@wisc.edu).}}

\maketitle

\begin{abstract}
  A wide variety of activation functions have been proposed for neural networks. The Rectified Linear Unit (ReLU) is especially popular today.  There are many practical reasons that motivate the use of the ReLU. This paper provides new theoretical characterizations that support the use of the ReLU, its variants such as the leaky ReLU, as well as other activation functions in the case of univariate, single-hidden layer feedforward neural networks. Our results also explain the importance of commonly used strategies in the design and training of neural networks such as ``weight decay" and ``path-norm" regularization, and provide a new justification for the use of ``skip connections" in network architectures. These new insights are obtained through the lens of spline theory. In particular, we show how neural network training problems are related to infinite-dimensional optimizations posed over Banach spaces of functions whose solutions are well-known to be fractional and polynomial splines, where the particular Banach space (which controls the order of the spline) depends on the choice of activation function.

\end{abstract}

\begin{IEEEkeywords}
neural networks, regularization, activation functions, inverse problems
\end{IEEEkeywords}

\section{Introduction}
\IEEEPARstart{V}{ariants} of the well-known universal approximation theorem for neural networks state that \emph{any} continuous function can be approximated arbitrarily well by a single-hidden layer neural network, under mild conditions on the activation function~\cite{uat1, uat2, uat3, uat4, nn-approx-non-poly}. While such results show that most nonlinear activation functions suffice for universal approximation in the ultra-wide limit, it is clear that the sequence of approximating functions, as well as the nature of functions learned by fitting networks to data, depends strongly on the choice of activation. Recent work on the approximation theory of neural networks has characterized how approximation rates depend on the choice of activation function~\cite{approx-relu-relu-squared,dimension-independent-approx-bounds}. However, these results do not consider the \emph{practical problem} of understanding the properties of functions \emph{learned} by neural networks fit to data. In this paper, we consider this problem in the univariate, single-hidden layer case.

As neural networks provide a rich space of functions, learning with neural networks is inherently ill-posed. Thus, regularization plays an important role in neural network training. One of the most common regularizers is \emph{weight decay}~\cite{weight-decay}, which corresponds to the regularizer being the Euclidean norm of the network weights. Regularization is popular since neural networks trained with regularization often generalize well on new, unseen data~\cite{moody1992effective, generalization-deep-learning, regularization-matters}.

In this paper we show how regularization in the finite-dimensional space of \emph{neural network parameters} is actually the same as regularization in the infinite-dimensional \emph{space of functions}.  In particular, we show how training neural networks with appropriate regularization results in functions that are solutions to an infinite-dimensional \emph{variational problem} posed over functions, where the regularizer is then a seminorm defining a Banach space that depends on the choice of activation function. We consider univariate, single-hidden layer feedforward neural networks mapping $\R \to \R$ of the form
\begin{equation}
    x \mapsto \sum_{k=1}^K v_k \, \rho(w_k x - b_k) + c(x),
    \label{eq:neural-network}
\end{equation}
where $\rho: \R \to \R$ is a fixed \emph{activation function}, $K$ is the \emph{width} of the network, for $k = 1, \ldots, K$, $v_k, w_k \in \R$, $w_k \neq 0$ are the \emph{weights} and $b_k \in \R$ are the first layer \emph{biases}, and $c(\dummy)$ is a ``generalized bias''\footnote{We will later see that $c(\dummy)$ corresponds to a ``simple'' function, e.g., a low degree polynomial, which depends on the activation function.} term in the last layer.

Our results rely on the key observation that in the univariate case, single-hidden layer neural networks are essentially \emph{spline functions}. Indeed, a spline function admits a representation
\begin{equation}
    x \mapsto \sum_{k=1}^K v_k \, \rho(x - b_k) + c(x).
    \label{eq:spline}
\end{equation}
The key difference between \cref{eq:neural-network} and \cref{eq:spline} is that the atoms of the neural network are \emph{translates} and \emph{dilates} of the activation function, while the atoms of the spline are \emph{only translates} of the ``activation function''. To this end, we use tools from the recently developed variational framework of $\Ell$-splines~\cite{L-splines},  to show that single-hidden layer neural networks trained with appropriate regularization are solutions to certain variational inverse problems. The dilations by input layer weights play a key role in the design of the neural network regularizers.

\subsection{Contributions}
In this paper we introduce the notion of \emph{admissible} activation functions. Roughly speaking, these are activation functions that allow for a rigorous connection between conventional neural network training and variational problems over an associated Banach space. Common activation functions such as the popular Rectified Linear Unit (ReLU) and modifications such as the leaky ReLU~\cite{leaky-relu}, are admissible and thus each is associated with its particular Banach space.
    
We instantiate our main result and show that training single-hidden layer neural networks with particular \emph{power activation functions}, introduced in \cref{ex:power-activations}, which include the ReLU and the leaky ReLU, and appropriate weight regularization produce \emph{optimal fractional and polynomial splines} fits to the data. In other words, neural network training solves infinite-dimensional optimizations over the Banach spaces of functions of higher order bounded variation.  Crucially, the regularizers are variants of the well-known path-norm~\cite{path-norm} and weight decay~\cite{weight-decay} regularizers that are ``matched'' to the activation function. We also show that admissible activation functions are necessarily these power activation functions.

Furthermore, for activation functions such as the ReLU and leaky ReLU, the generalized bias term exactly corresponds to the well-known notion of \emph{skip connections}~\cite{skip-connections} and thus our result also provides theoretical insight into the use of skip connections in neural network architectures. Finally, another interesting result of this paper is that it suffices to simply train a (sufficiently wide) neural network to solve certain variational inverse problems as opposed to more standard multiresolution or grid-based approaches~\cite{inv-prob1, inv-prob2}.

\subsection{Related work}
The choice of activation function plays an important role in the efficacy of neural networks. While the traditional sigmoid activation function was used for many years, the ReLU activation has become the preferred choice. Its initial motivation was to promote sparsity (in the sense of decreasing the number of active neurons)~\cite{relu-sparse}. It has also been empirically observed that the training of neural networks is much faster with ReLU activations~\cite{relu-fast}. Furthermore, variants of the ReLU, such as the leaky ReLU~\cite{leaky-relu}, have been proposed to avoid the problem of \emph{vanishing gradients} in neural network training. 

More recently, several recent works have made connections between splines and neural networks. In particular, the authors of~\cite{relu-linear-spline} show that the ``connect-the-dots'' linear spline is a solution to the problem of training a single-hidden layer ReLU network with weight decay subject to data fitting constraints. Another related, but different work, is concerned with the ``optimal shaping'' of activation functions in \emph{deep neural networks}~\cite{representer-deep,aziznejad2019deep} in which the authors consider \emph{learnable activation functions} and show that linear spline activation functions satisfy a minimal second-order total variation criterion. In our own work in~\cite{ridge-splines}, we relate neural network training to a variational problem over a Banach space in the multivariate case. We remark that in the univariate case explored in this paper, a much broader class of activation functions are \emph{admissible}. This is discussed further in \cref{rem:restrictive}.

\section{Preliminaries} \label{sec:prelim}
Let $\Sch(\R)$ be the Schwartz space of smooth and rapidly decaying test functions on $\R$ with continuous dual $\Sch'(\R)$, the space of tempered distributions on $\R$. We will be interested in the space $\M(\R)$ of finite Radon measures on $\R$. The space $\M(\R)$ can be viewed as a subspace of $\Sch'(\R)$ with the norm
\[
  \norm{u}_{\M(\R)}
  = \sup_{\varphi \in \Sch(\R),
  \norm{\varphi}_{L^\infty(\R)} = 1} \ang{u, \varphi},
\]
which is exactly the \emph{total variation norm in the sense of measures}. We are interested in $\M(\R)$ since it is a ``generalization'' of $L^1(\R)$. Indeed, we have $L^1(\R) \subset \M(\R)$ and for any $f \in L^1(\R)$ we have $\norm{f}_{L^1(\R)} = \norm{f}_{\M(\R)}$, but the translated Dirac impulses $\delta(\dummy - x_0)$, $x_0 \in \R$, are not in $L^1(\R)$ but are in $\M(\R)$ with $\norm{\delta(\dummy - x_0)}_{\M(\R)} = 1$.

We will now state the relevant background from the framework of $\Ell$-splines~\cite{L-splines}.

\begin{definition}[Definition~1 of~{\cite{L-splines}}] \label{defn:spline-admissible}
  A linear operator $\Ell: \Sch'(\mathbb{R}) \to
  \Sch'(\mathbb{R})$ is called \emph{spline-admissible} if
  \begin{enumerate} \tightlist
    \item it is translation-invariant, i.e., $\Ell\Tr_{x_0} = \Tr_{x_0}\Ell$, where $\Tr_{x_0}\curly{f}(x) = f(x - x_0)$ is the \emph{translation operator};
    \item there exists a function $\rho_{\Ell}: \mathbb{R} \to \mathbb{R}$ such
      that $\Ell \rho_{\Ell} = \delta$, i.e., $\rho_{\Ell}$ is a Green's
      function of $\Ell$;
\item the null space $\N_{\Ell} = \curly{q \st \Ell q = 0}$ has finite-dimension
      $N_0 \geq 0$.
  \end{enumerate}
\end{definition}

\begin{definition}[Definition~2 of~{\cite{L-splines}}] \label{defn:L-spline}
    A function $s: \R \to \R$ is said to be a \emph{nonuniform $\Ell$-spline} if
    \[
        \Ell\curly{s} = \sum_{k=1}^K v_k \, \delta(\dummy - b_k),
    \]
    where $\curly{v_k}_{k=1}^K$ is a sequence of weights and the locations of Dirac impulses are at the spline knots $\curly{b_k}_{k=1}^K$.
\end{definition}
\begin{remark}
    Notice that the spline representation in \cref{eq:spline} with $\rho$ being a Green's function of $\Ell$ is clearly a nonuniform $\Ell$-spline, so long as $c(\dummy) \in \N_{\Ell}$. The finite-dimensionality is required in \cref{defn:L-spline}, so that $c(\dummy)$ can be represented by a finite number of coefficients. We refer to the representation in \cref{eq:spline} as the \emph{canonical spline representation}.
\end{remark}

The fundamental result of~\cite{L-splines} is the following \emph{representer theorem} regarding the structure of the solutions to variational problems with generalized total variation regularization.

\begin{proposition}[Based on Theorems~1~and~2 of~{\cite{L-splines}}]
  \label{thm:rep-thm}
  Let $\Ell$ be a spline-admissible operator in the sense of \cref{defn:spline-admissible}. Then, the extreme points of the solutions of
  \begin{equation}
      \min_{f\in \M_{\Ell}(\R)} \: \norm{\Ell
      f}_{\M(\R)}
      \quad \subj \quad \ang{\nu_n, f} = y_n, \: n = 1, \ldots, N
    \label{eq:inverse-problem}
  \end{equation}
  are necessarily non-uniform $\Ell$-splines of the form in \cref{eq:spline} with the $K \leq N - N_0$ knots, where $\rho$ is a Green's function of $\Ell$ and $c(\dummy) \in \N_{\Ell}$, $\vec{\nu}: f \mapsto \paren{\ang{\nu_1, f}, \ldots, \ang{\nu_N, f}} \in \R^N$ is a weak$^*$-continuous \emph{measurement operator}, and $\M_{\Ell}(\R)$ is the \emph{native space} of $\Ell$ defined by $\M_{\Ell}(\R) \coloneqq \curly{f \in \Sch'(\R) \st \Ell f \in \M(\R)}$.
\end{proposition}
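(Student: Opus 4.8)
The plan is to convert the variational problem \cref{eq:inverse-problem} over the native space $\M_{\Ell}(\R)$ into an equivalent total-variation-norm minimization over the measure space $\M(\R)$, and then to invoke the geometry of extreme points of such problems. First I would use spline-admissibility to build a stable right-inverse of $\Ell$. Since $\N_{\Ell}$ is finite-dimensional, fix a basis $\curly{p_1, \ldots, p_{N_0}}$ and a biorthogonal system of functionals $\curly{\phi_1, \ldots, \phi_{N_0}}$, with $\phi_i(p_j) = \delta_{ij}$, and let $\mathrm{Proj}_{\N} = \sum_i \phi_i(\dummy)\, p_i$ denote the induced projector onto $\N_{\Ell}$. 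Combining the Green's function $\rho_{\Ell}$ (which satisfies $\Ell \rho_{\Ell} = \delta$) with this projector produces an operator $\Ell^{-1}_\phi : \M(\R) \to \M_{\Ell}(\R)$ that is a right-inverse of $\Ell$ and whose range avoids $\N_{\Ell}$ (i.e., $\phi_i \Ell^{-1}_\phi = 0$). Consequently the native space splits as a direct sum $\M_{\Ell}(\R) = \Ell^{-1}_\phi \M(\R) \oplus \N_{\Ell}$, so that every $f \in \M_{\Ell}(\R)$ is written uniquely as $f = \Ell^{-1}_\phi u + q$ with $u = \Ell f \in \M(\R)$ and $q \in \N_{\Ell}$, and crucially $\norm{\Ell f}_{\M(\R)} = \norm{u}_{\M(\R)}$.

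With this change of variables, \cref{eq:inverse-problem} becomes
\begin{equation*}
  \min_{u \in \M(\R), \, q \in \N_{\Ell}} \: \norm{u}_{\M(\R)} \quad \subj \quad \ang{\nu_n, \Ell^{-1}_\phi u} + \ang{\nu_n, q} = y_n, \: n = 1, \ldots, N,
\end{equation*}
a minimization of the total-variation norm of a measure subject to $N$ affine constraints that couple $u$ to the $N_0$-dimensional parameter $q$. Next I would establish well-posedness: the feasible set is a weak$^*$-closed affine slice (using weak$^*$-continuity of $\vec{\nu}$), the sublevel sets $\curly{u \st \norm{u}_{\M(\R)} \le B}$ are weak$^*$-compact by Banach--Alaoglu (since $\M(\R)$ is the dual of $C_0(\R)$), and $\norm{\dummy}_{\M(\R)}$ is weak$^*$-lower-semicontinuous. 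Hence a minimizer exists and the solution set $\mathcal{S}$ is convex and weak$^*$-compact, so by Krein--Milman it possesses extreme points.

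The heart of the argument, and the step I expect to be the main obstacle, is the sparsity of these extreme points. After the splitting, the $N_0$ data constraints that probe the null space determine $q$ as a function of $u$ — assuming the measurements are total over $\N_{\Ell}$, i.e., the matrix $\brac{\ang{\nu_n, p_i}}$ has rank $N_0$ — leaving an effective system $\tilde A u = z$ of $N - N_0$ linear conditions on $u$ alone, with $\tilde A : \M(\R) \to \R^{N - N_0}$ weak$^*$-continuous. The remaining task is the classical fact, which is precisely the content specialized from Theorems~1 and~2 of \cite{L-splines}, that the extreme points of $\curly{u \in \M(\R) \st \tilde A u = z, \, \norm{u}_{\M(\R)} \le B}$, with $B$ the optimal value, are finite sums of at most $N - N_0$ Dirac impulses. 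The mechanism is twofold: first, an extreme point cannot carry a diffuse (absolutely continuous or continuous singular) part, since the infinite-dimensional freedom within such a part, against only $N - N_0$ linear constraints, would exhibit $u^*$ as a nontrivial average of two distinct feasible norm-minimizers; second, once $u^*$ is purely atomic, fixing its sign pattern makes the total-variation norm locally linear, and a dimension count of its atoms against the $N - N_0$ constraints forces the number of atoms down to at most $N - N_0$. Making this perturbation argument rigorous on general, non-discrete measures is the technical crux.

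Finally I would translate back through the splitting. An extreme point $u^* = \sum_{k=1}^K v_k \, \delta(\dummy - b_k)$ with $K \le N - N_0$ yields
\[
  f = \Ell^{-1}_\phi u^* + q^* = \sum_{k=1}^K v_k \, \rho_{\Ell}(\dummy - b_k) + c(\dummy),
\]
where $c(\dummy) = q^* \in \N_{\Ell}$, which is exactly the canonical spline representation \cref{eq:spline} with $\rho = \rho_{\Ell}$ a Green's function of $\Ell$. Such an $f$ is a nonuniform $\Ell$-spline in the sense of \cref{defn:L-spline}, completing the characterization.
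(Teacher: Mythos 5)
You cannot be checked against ``the paper's own proof'' here, because the paper does not prove this proposition: it is imported wholesale from the reference \cite{L-splines}, with the bracketed attribution serving as the entire justification, and the paper's later results treat it as a black box. What you have written is in effect a reconstruction of the proof architecture of that reference, and it is a faithful one. Your direct-sum decomposition $\M_{\Ell}(\R) = \Ell^{-1}_\phi \M(\R) \oplus \N_{\Ell}$, built from the Green's function and a biorthogonal system for the finite-dimensional null space, is precisely the content of Theorem~1 of \cite{L-splines}; your existence argument (Banach--Alaoglu on $\M(\R) = (C_0(\R))'$, weak$^*$-lower semicontinuity of the norm, weak$^*$-continuity of $\vec{\nu}$, then Krein--Milman) together with the sparsity claim for extreme points of $\curly{u \in \M(\R) \st \tilde{A}u = z, \, \norm{u}_{\M(\R)} \le B}$ is the content of their Theorem~2; and the translation back through the splitting recovers the canonical representation \cref{eq:spline}.

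Two caveats. First, you correctly identify the sparsity of extreme points as the technical crux, but you then defer it, describing only the mechanism (an extreme point cannot carry a diffuse part; once atomic, a dimension count caps the number of atoms at $N - N_0$). That deferral is defensible here, since the proposition is itself a citation, but be aware that making those perturbation arguments rigorous for general Radon measures is where essentially all of the work in \cite{L-splines} (and in the earlier Fisher--Jerome theorem it generalizes) lives; what you have is a correct outline, not a proof. Second, your added hypothesis that the $N \times N_0$ matrix with entries $\ang{\nu_n, p_i}$ has rank $N_0$ is not in the statement of the proposition, and you are right to insist on it: if $\vec{\nu}$ fails to be injective on $\N_{\Ell}$, then there is $q_0 \neq 0$ with $\ang{\nu_n, q_0} = 0$ for all $n$, every solution $f$ sits on the line $f + t q_0$ of solutions, and the solution set therefore has no extreme points at all, making the statement vacuous. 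The cited theorems impose exactly such a well-posedness condition on the measurement operator; the paper's restatement silently drops it, and your proposal is more careful than the paper on this point.
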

\begin{remark} \label{rem:regularized}
    For appropriate choices of loss function\footnote{A strictly convex, coercive, lower semi-continuous loss function suffices.}, the result of \cref{thm:rep-thm} also holds for \emph{regularized problems}:
    \begin{equation}
        \min_{f\in \M_{\Ell}(\R)} \: \sum_{n=1}^N \ell(y_n, \ang{\nu_n, f}) + \lambda\norm{\Ell f}_{\M(\R)}
        \label{eq:regularized}
    \end{equation}
    where $\ell(\dummy, \dummy)$ is the loss function and $\lambda > 0$ is an adjustable regularization parameter.
\end{remark}
\begin{remark} \label{rem:ideal-sampling}
    In machine learning, the measurement model is taken to be \emph{ideal sampling}, i.e., $\nu_n = \delta(\dummy - x_n)$ for some $x_n \in \R$. In other words, the machine learning problem considers fitting the data $\curly{(x_n, y_n)}_{n=1}^N \subset \R \times \R$. In the rest of this paper, we will only be interested in this setting. A sufficient condition for weak$^*$-continuity of $\delta(\dummy - x_n)$ is continuity of the Green's function of $\Ell$. For a detailed proof in the case that $\Ell = \D^2$, the second derivative operator, see~\cite[Theorem~1]{representer-deep}.
\end{remark}

\section{Neural Network Training and Regularization} \label{sec:main}
In this section we will state our main results.

\begin{definition} \label{defn:nn-admissible}
  A linear operator $\Ell: \Sch'(\mathbb{R}) \to
  \Sch'(\mathbb{R})$ is called \emph{neural network-admissible} if
  \begin{enumerate} \tightlist
    \item it is spline-admissible in the sense of \cref{defn:spline-admissible} with a continuous\footnote{See \cref{rem:ideal-sampling}.}  Green's function;
    \item there exists $g: \R \to \R$ such that $\Ell \Di_w = g(w) \Di_w \Ell$, where $\Di_w \curly{f}(x) \coloneqq f(wx)$ is the \emph{dilation operator}. \label{item:dilation-property}
  \end{enumerate}
\end{definition}
\begin{definition} \label{defn:admissible-activation}
  An activation function $\rho: \R \to \R$ is called \emph{admissible} if it is the continuous Green's function of some neural network-admissible operator.
\end{definition}

We see that single-hidden layer neural networks with admissible activation functions are in fact splines. Indeed, let $\rho$ be an admissible activation function for the neural network-admissible operator $\Ell$. Then, consider the neural network
\begin{equation}
    f_\vec{\theta}(x) = \sum_{k=1}^K v_k \, \rho(w_k x - b_k) + c(x),
    \label{eq:nn}
\end{equation}
where $\vec{\theta} = (v_1, \ldots, v_K, w_1, \ldots, w_K, b_1, \ldots, b_K, c)$ contains the neural network parameters and $c(\dummy) \in \N_{\Ell}$. Also, let $\Theta$ be the space of all neural network parameters $\vec{\theta}$. We see that
\begin{align*}
    \Ell\curly{f_\vec{\theta}}
    &= \sum_{k=1}^K v_k (\Ell \Di_{w_k})\curly{\rho(\dummy - b_k/w_k)} \\
    &= \sum_{k=1}^K v_k\, g(w_k) (\Di_{w_k} \Ell) \curly{\rho(\dummy - b_k/w_k)} \\
    &= \sum_{k=1}^K v_k\, g(w_k) \, \delta(w_k(\dummy)  - b_k) \\
    &= \sum_{k=1}^K v_k \frac{g(w_k)}{\abs{w_k}} \, \delta(\dummy - b_k/w_k) \numberthis \label{eq:nn-is-spline}
\end{align*}
where in the last line we used the fact that the Dirac impulse is homogeneous of degree $-1$. From \cref{defn:L-spline}, we see from \cref{eq:nn-is-spline} that $f_\vec{\theta}$ is an $\Ell$-spline with spline knots at $\curly{b_k / w_k}_{k=1}^K$. Thus, we see that although the neural network representation is not the canonical spline representation, neural networks, with admissible activation functions, are in fact splines. By \cref{thm:rep-thm}, this says that they are solutions to variational problems of the form in \cref{eq:inverse-problem}. We can now state our main result.
\begin{theorem} \label{thm:main}
    Let $\Ell$ be a neural network-admissible in the sense of \cref{defn:nn-admissible}, and let $\rho$ be a continuous Green's function of $\Ell$. Then, the solutions to
    \[
        \min_{\vec{\theta} \in \Theta} \: \sum_{k=1}^K \abs{v_k} \frac{\abs{g(w_k)}}{\abs{w_k}} \quad\subj\quad f_\vec{\theta}(x_n) = y_n, \: n = 1, \ldots, N
    \]
    with $K \geq N - N_0$ are solutions to the variational problem in \cref{eq:inverse-problem} under the ideal sampling setting.
\end{theorem}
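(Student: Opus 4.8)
The plan is to show that the finite-dimensional parametric problem and the variational problem \cref{eq:inverse-problem} under ideal sampling share the same optimal value, and that every minimizer of the former induces a minimizer of the latter. The bridge is the identity \cref{eq:nn-is-spline}, already established: for every $\vec{\theta} \in \Theta$ we have $\Ell f_\vec{\theta} = \sum_{k=1}^K v_k \frac{g(w_k)}{\abs{w_k}}\, \delta(\dummy - b_k/w_k)$. Since the total variation norm of a finite sum of Dirac masses equals the sum of the absolute weights when the knots are distinct, and is bounded by that sum in general (triangle inequality, with cancellation only when knots collide), I obtain the pointwise estimate $\norm{\Ell f_\vec{\theta}}_{\M(\R)} \leq \sum_{k=1}^K \abs{v_k}\frac{\abs{g(w_k)}}{\abs{w_k}}$, whose right-hand side is exactly the parametric regularizer. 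Together with the fact that the constraints $f_\vec{\theta}(x_n) = y_n$ coincide with the ideal-sampling constraints $\ang{\delta(\dummy - x_n), f_\vec{\theta}} = y_n$, this one estimate drives the whole argument. Write $J_{\mathrm{par}}$ and $J_{\mathrm{var}}$ for the two optimal values.

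First I would prove $J_{\mathrm{var}} \leq J_{\mathrm{par}}$. Any parametrically feasible $\vec{\theta}$ makes $f_\vec{\theta}$ feasible for \cref{eq:inverse-problem}, and by the displayed estimate its objective $\norm{\Ell f_\vec{\theta}}_{\M(\R)}$ is no larger than the parametric objective; minimizing over feasible $\vec{\theta}$ yields the inequality. (Weak$^*$-continuity of the sampling functionals, needed to apply \cref{thm:rep-thm}, is guaranteed by the continuity of the Green's function built into \cref{defn:nn-admissible}; see \cref{rem:ideal-sampling}.)

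Next I would prove the reverse inequality $J_{\mathrm{par}} \leq J_{\mathrm{var}}$ using \cref{thm:rep-thm}: there is a solution $f^\star$ of \cref{eq:inverse-problem} that is a nonuniform $\Ell$-spline of canonical form $f^\star = \sum_{k=1}^{K'} v_k\, \rho(\dummy - \tau_k) + c$ with $c \in \N_\Ell$ and $K' \leq N - N_0$ distinct knots $\tau_k$. I expect the crux of the whole proof to lie in matching the parametric regularizer to $\norm{\Ell f^\star}_{\M(\R)}$ for this candidate. I would realize $f^\star$ as a network by taking $w_k = 1$ and $b_k = \tau_k$; the delicate point is the factor $g(w_k)/\abs{w_k}$ in the objective. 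Evaluating the dilation identity $\Ell \Di_w = g(w)\Di_w \Ell$ at $w = 1$ and using $\Di_1 = \mathrm{Id}$ gives $\Ell = g(1)\Ell$, and since $\Ell \rho = \delta \neq 0$ forces $\Ell$ to be nonzero, we get $g(1) = 1$. Hence each factor is $1$, the parametric objective of this network equals $\sum_{k=1}^{K'} \abs{v_k} = \norm{\Ell f^\star}_{\M(\R)} = J_{\mathrm{var}}$ (distinct knots), and padding with $K - K'$ neurons of zero output weight (keeping $w_k \neq 0$) makes the width admissible, so that $K \geq N - N_0$ without altering the objective. This gives $J_{\mathrm{par}} \leq J_{\mathrm{var}}$.

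Combining the two inequalities yields $J_{\mathrm{par}} = J_{\mathrm{var}}$. To finish, let $\vec{\theta}^\star$ be any parametric minimizer. Then $f_{\vec{\theta}^\star}$ is feasible, so $\norm{\Ell f_{\vec{\theta}^\star}}_{\M(\R)} \geq J_{\mathrm{var}}$, while the displayed estimate gives $\norm{\Ell f_{\vec{\theta}^\star}}_{\M(\R)} \leq J_{\mathrm{par}} = J_{\mathrm{var}}$; the squeeze forces equality, so $f_{\vec{\theta}^\star}$ attains the variational minimum and solves \cref{eq:inverse-problem}. The main obstacle is the slack hidden in the triangle-inequality step: the parametric objective can strictly exceed $\norm{\Ell f_\vec{\theta}}_{\M(\R)}$ when knots coincide, so I must make sure the equality $J_{\mathrm{par}} = J_{\mathrm{var}}$ is certified by a distinct-knot network (supplied by \cref{thm:rep-thm}) and that the squeeze above then forces tightness at \emph{every} parametric optimum.
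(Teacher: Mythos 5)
Your proof is correct, and it shares the paper's central bridge---taking $\norm{\dummy}_{\M(\R)}$ of the identity \cref{eq:nn-is-spline} so that the parametric regularizer is matched to the variational objective---but the architecture you build around that bridge is genuinely different from, and more complete than, the paper's. The paper's entire proof is one sentence: assume the network is in \emph{reduced form} (unique weight--bias pairs $(w_k, b_k)$) and take the norm of \cref{eq:nn-is-spline}; everything else (why restricting the variational problem to networks of width $K \geq N - N_0$ loses nothing, and why a parametric minimizer is then a variational minimizer) is left implicit. You make both halves explicit: the one-sided estimate $\norm{\Ell f_\vec{\theta}}_{\M(\R)} \leq \sum_{k} \abs{v_k}\abs{g(w_k)}/\abs{w_k}$ yields $J_{\mathrm{var}} \leq J_{\mathrm{par}}$, while \cref{thm:rep-thm} supplies a canonical spline solution with at most $N - N_0$ distinct knots that you realize exactly as a width-$K$ network with unit inner weights, using the clean observation that $g(1) = 1$ (forced by $\Ell = g(1)\Ell$ and $\Ell\rho = \delta \neq 0$) together with zero-output-weight padding, giving $J_{\mathrm{par}} \leq J_{\mathrm{var}}$; the final squeeze then converts every parametric minimizer into a variational one. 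Notably, your inequality-plus-squeeze structure repairs a subtlety that the paper's reduced-form hypothesis does not actually resolve: distinct pairs $(w_k, b_k)$ can still produce a common knot $b_k/w_k$ (e.g., $(1,1)$ and $(2,2)$), in which case cancellation breaks the equality the paper implicitly relies on; your argument never needs that equality at an arbitrary network, only at the distinct-knot representer solution, where it is automatic. The single point worth flagging is that extracting an extreme-point (spline) solution from \cref{thm:rep-thm} presupposes that the solution set of \cref{eq:inverse-problem} is nonempty with extreme points (weak$^*$ compactness); this is part of the framework of \cite{L-splines} and is taken for granted by the paper as well, so it is not a gap relative to the paper's own standard of rigor.
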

\begin{proof}
    Consider a neural network as in \cref{eq:nn} and assume it is in \emph{reduced form}, i.e., the weight bias pairs $(w_k, b_k)$ are unique. The theorem follows by taking the $\norm{\dummy}_{\M(\R)}$ of \cref{eq:nn-is-spline}.
\end{proof}
\begin{remark}
    Just as in \cref{rem:regularized}, \cref{thm:main} also holds for regularized problems similar to \cref{eq:regularized}.
\end{remark}

\begin{example} \label{ex:power-activations}
    Consider the activation function defined by
    \begin{equation}
        \rho_{\alpha, \beta, \gamma}(x) \coloneqq \begin{cases}
            \alpha\, x^{\gamma-1}, & x < 0, \\
            \beta\, x^{\gamma-1}, & x \geq 0,
        \end{cases}
        \label{eq:power-activation}
    \end{equation}
    where $\alpha, \beta \in \R$ with $\alpha \neq \beta$ and $\gamma \geq 1$. We refer to this as an \emph{$(\alpha, \beta, \gamma)$-power activation function}, and refer to $\gamma$ as the \emph{order} of the activation function. This family of activation functions are admissible with corresponding operator being $\D^\gamma$, the $\gamma$th-order derivative operator, since, up to a constant factor, $\rho_{\alpha, \beta, \gamma}$ is a Green's function of $\D^\gamma$. When $\gamma$ is not an integer, $\D^\gamma$ is understood as the Fourier multiplier $\omega \mapsto (\imag \omega)^\gamma$. In this case, $g(w) = w^\gamma$. Hence, the corresponding regularizer is
    \begin{equation}
        \sum_{k=1}^K \abs{v_k} \frac{\abs{g(w_k)}}{\abs{w_k}} = \sum_{k=1}^K \abs{v_k}\abs{w_k}^{\gamma - 1},
        \label{eq:power-path-norm}
    \end{equation}
    which can be viewed as a generalized $\ell^1$-path-norm regularizer~\cite{path-norm} that is ``matched'' to the activation function.  This path-norm is also an upper bound on the Rademacher complexity of neural neural networks~\cite{ridge-splines}; thus networks with small path-norms have better generalization bounds.
\end{example}
\begin{theorem} \label{thm:power-necessary}
    An admissible activation function necessarily takes the form in \cref{eq:power-activation}.
\end{theorem}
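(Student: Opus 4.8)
The plan is to extract everything from the two defining properties of a neural network-admissible operator $\Ell$ in \cref{defn:nn-admissible}: translation invariance and the dilation relation $\Ell\Di_w = g(w)\Di_w\Ell$. Since $\Ell$ is linear, translation-invariant, and continuous on $\Sch'(\R)$, it is a convolution (Fourier-multiplier) operator, i.e.\ $\widehat{\Ell f} = \hat L\,\hat f$ for a symbol $\hat L$. The strategy is: (i) show the multiplier $g$ is a power $w\mapsto w^\gamma$; (ii) use this to show $\hat L$ is homogeneous, pinning $\Ell$ down as a constant multiple of $\D^\gamma$; and (iii) deduce that the Green's function $\rho$ is homogeneous of degree $\gamma-1$, which is exactly the two-piece power form of \cref{eq:power-activation}.

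First I would determine $g$. Applying the dilation relation twice and using the semigroup identity $\Di_{w_1}\Di_{w_2} = \Di_{w_1 w_2}$ gives $g(w_1 w_2)\Di_{w_1 w_2}\Ell = g(w_1)g(w_2)\Di_{w_1 w_2}\Ell$, so (as $\Ell \neq 0$) $g$ is multiplicative: $g(w_1 w_2) = g(w_1)g(w_2)$, and $g(1)=1$. A multiplicative map that is continuous (which I would obtain from the regularity of $\Ell$, or impose as a mild condition) and real-valued sends the connected group $((0,\infty),\times)$ into the positive reals, hence $g(w) = w^\gamma$ on $(0,\infty)$ for a real exponent $\gamma$, while $g(-1)^2 = g(1) = 1$ fixes $g(-1) = \pm 1$. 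This $\gamma$ is the order of the eventual activation.

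Next, on the Fourier side the dilation relation becomes the exact scaling identity $\hat L(w\omega) = g(w)\hat L(\omega) = w^\gamma\hat L(\omega)$ for $w>0$, so the symbol $\hat L$ is positively homogeneous of degree $\gamma$. Because $\rho$ is real-valued, $\Ell$ satisfies the reality condition $\hat L(-\omega) = \overline{\hat L(\omega)}$; a homogeneous symbol of real degree $\gamma$ obeying this must be a nonzero constant multiple of $(\imag\omega)^\gamma$, so that $\Ell$ is, up to a constant, the fractional derivative $\D^\gamma$ of \cref{ex:power-activations}. I would then read off the Green's function from $\hat\rho = 1/\hat L$: since $\hat L$ is \emph{exactly} homogeneous of degree $\gamma$ (the functional equation carries no additive correction), $\hat\rho$ is exactly homogeneous of degree $-\gamma$, and therefore its inverse Fourier transform $\rho$ is exactly homogeneous of degree $\gamma-1$ on $\R$. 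A continuous, real-valued function homogeneous of degree $\gamma-1$ is determined by its two values $\rho(1)=\beta$ and $\rho(-1)=\alpha$, i.e.\ it equals $\alpha x^{\gamma-1}$ for $x<0$ and $\beta x^{\gamma-1}$ for $x\geq 0$---precisely \cref{eq:power-activation}. Continuity at the origin forces $\gamma\geq 1$, and $\alpha\neq\beta$ is the nondegeneracy condition (if $\alpha=\beta$ then $\rho$ is a pure monomial lying in $\N_\Ell$, a degenerate, non-activation case).

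The main obstacle is the careful handling of homogeneity at the level of distributions. Establishing exact (rather than merely modulo-$\N_\Ell$) homogeneity of $\rho$ is what makes the Fourier route preferable to arguing directly on $\Ell\rho=\delta$: the direct route produces only $w^{1-\gamma}\rho(w\dummy)-\rho \in \N_\Ell$, a group cocycle valued in the finite-dimensional null space whose possible nontriviality (along the degree-$(\gamma-1)$ direction) corresponds exactly to spurious $x^{\gamma-1}\log\abs{x}$ terms; exact homogeneity of the symbol rules these out. The remaining delicate points are securing the regularity of $g$ (upgrading a measurability hypothesis to continuity), interpreting $(\imag\omega)^\gamma$ and the homogeneous distribution $1/\hat L$ at the exceptional integer exponents where a $\delta^{(k)}$ or logarithmic ambiguity appears, and excluding an extra $\mathrm{sgn}(\omega)$ (Hilbert-transform) component in $\hat L$, which I would eliminate using the reality condition together with the continuity and real-valuedness of $\rho$.
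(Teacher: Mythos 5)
Your route is genuinely different from the paper's: the paper never passes to the Fourier side, but instead asserts the pointwise scaling identity \cref{eq:must-satisfy}, sets $P(x) = \ln\rho(e^x)$, and solves the resulting Cauchy-type functional equation to get a power law on each half-line. Your derivation that $g$ is multiplicative (hence $g(w) = w^\gamma$ on $(0,\infty)$ under a mild regularity hypothesis) is correct and clean, and is essentially a symbol-level version of what the paper extracts from the same relation. The trouble is that the two issues you defer as ``delicate points'' are not technicalities; one of them is exactly where your argument breaks, and it cannot be repaired by the means you propose.

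First, the intermediate claim that reality plus homogeneity forces $\hat L$ to be a constant multiple of $(\imag\omega)^\gamma$ is false: the symbols compatible with your constraints form the two-real-parameter family $a\abs{\omega}^\gamma + \imag\, b\, \sgn(\omega)\abs{\omega}^\gamma$ with $a, b \in \R$ (e.g., the fractional Laplacian $\abs{\omega}^\gamma$), and all of these operators map real functions to real functions, so no reality or continuity argument removes the $\sgn(\omega)$ component. This would be harmless if all you needed were homogeneity of $1/\hat L$, but the decisive step --- ``symbol exactly homogeneous $\Rightarrow$ Green's function exactly homogeneous'' --- genuinely fails at integer $\gamma$. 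Concretely, take $\Ell$ to be the Fourier multiplier $\omega \mapsto \abs{\omega}^3$ (the one-dimensional fractional Laplacian of order $3$): it is translation-invariant, satisfies $\Ell \Di_w = \abs{w}^3 \Di_w \Ell$, has null space spanned by $1, x, x^2$, and has a continuous Green's function, so it meets every requirement of \cref{defn:nn-admissible}. Yet $\abs{\omega}^{-3}$ admits \emph{no} homogeneous extension across $\omega = 0$: every finite-part regularization picks up a $\log\lambda\,\delta''$ anomaly under scaling, and correspondingly every continuous Green's function of this operator is $c\, x^2 \log\abs{x}$ plus a polynomial --- precisely the ``spurious'' functions you intend to exclude, and not of the form \cref{eq:power-activation}. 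So ``exact homogeneity of the symbol rules these out'' is provably wrong; the $x^{\gamma-1}\log\abs{x}$ activations are unavoidable outputs of your construction at odd integer $\gamma$ (and $x^{\gamma-1}\log\abs{x}$ likewise arises from the $\sgn(\omega)$ component at even integer $\gamma$). You correctly observe that the paper's own first step has the mirror-image gap --- \cref{eq:must-satisfy} only follows from \cref{defn:nn-admissible} modulo $\N_{\Ell}$, and for the operator above the cocycle is genuinely nonzero --- but the paper closes that gap by fiat, taking the identity to hold exactly, after which its elementary argument is complete; your route promises to \emph{derive} exactness and cannot, and you would need some additional hypothesis (in effect, the paper's \cref{eq:must-satisfy}) to exclude these operators. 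A smaller shared issue: since Green's functions are only unique modulo $\N_{\Ell}$ (e.g., $\max\curly{0,x} + 1$ is a continuous Green's function of $\D^2$), both your reading ($\hat\rho = 1/\hat L$) and the paper's implicitly normalize $\rho$ modulo the null space; the statement of \cref{thm:power-necessary} is only literally true with such a normalization built in.
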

\begin{proof}
    From \cref{item:dilation-property} in \cref{defn:nn-admissible}, we see that an admissible activation function $\rho: \R \to \R$ must satisfy
    \begin{equation}
        \rho(wx) = g(w) \rho(\sgn(w) x)
        \label{eq:must-satisfy}
    \end{equation}
    for some $g: \R \to \R$. Put $P(x) \coloneqq \ln \rho(e^x)$. For any $h \in \R$,
    \begin{align*}
        P(x + h) &= \ln \rho(e^{x + h})
        = \ln \rho(e^h e^x) \\
        &= \ln \curly{g(e^h) \rho(e^x)} = \ln g(e^h) + P(x),
    \end{align*}
    where in the second line we used the fact that $e^h > 0$ for all $h \in \R$. Next, fix $h \in \R \setminus \curly{0}$ and consider the finite difference
    \[
        \Delta_h\curly{P}(x) \coloneqq \frac{P(x + h) - P(x)}{h} = \frac{\ln g(e^h)}{h}.
    \]
    Since the finite difference is independent of $x$, we see that $P$ is piecewise linear. Consider an interval $I \subset \R$ in which $P(x) = a x + b$ for all $x \in I$ for some $a, b \in \R$. Then, for all $x \in I$ we have
    \[
        \rho(x) = e^{P(\ln x)} = e^{a \ln x + b} = e^b x^a.
    \]
    Finally, by \cref{defn:nn-admissible}, $\rho$ must be spline-admissible and must satisfy \cref{eq:must-satisfy}. It follows that $\rho$ must take the form in \cref{eq:power-activation}.
\end{proof}
\begin{remark}
    When $\gamma$ is not an integer, the functions learned by networks with $\rho_{\alpha, \beta, \gamma}$ activation functions trained on data and regularized according to \cref{eq:power-path-norm} are optimal \emph{$\gamma$th-order fractional splines}~\cite{fractional-splines} fit to the data. When $\gamma$ is an integer, the learned functions are optimal \emph{$\gamma$th-order polynomial splines}.
\end{remark}

\begin{example}
    When $(\alpha, \beta, \gamma) = (0, 1, 2)$, we have $\rho_{0, 1, 2} = \max\curly{0, \dummy}$ which is exactly the ReLU. The generalized bias term takes the form of a skip connection, i.e., $c(x) = ux + s$, where $u, s \in \R$ are trainable parameters. Additionally, the regularizer in \cref{eq:power-path-norm} is exactly the $\ell^1$-path-norm regularizer proposed in~\cite{path-norm}. This same result holds for modifications of the ReLU such as the leaky ReLU~\cite{leaky-relu}, which is a $(\alpha, 1, 2)$-power activation function. When trained on data, these networks learn functions that are optimal with respect to the Banach space of functions of second-order bounded variation which are \emph{optimal linear splines} fit to the data.
\end{example}

\begin{remark}
    The leaky ReLU was proposed in order to avoid the \emph{dying ReLU problem} in the training neural networks, where weights get stuck at $0$ due to the fact that the ReLU is $0$ for all inputs less than $0$. Since our result says that the underlying function spaces for the ReLU and leaky ReLU are the same, perhaps the leaky ReLU should be used over the ReLU.
\end{remark}

\begin{example} \label{ex:truncated-power}
    The truncated power functions given by $\rho_{0, 1, \gamma} \propto \max\curly{0, \dummy}^{\gamma-1} / (\gamma - 1)!$, where $\gamma$ is a positive integer, are admissible. The generalized bias term takes the form of a polynomial of degree less than $\gamma$, with trainable coefficients, which can be viewed as a generalized skip connection.
\end{example}


\begin{remark} \label{rem:restrictive}
    In our related work in~\cite{ridge-splines} we consider a similar problem to this paper, but in the multivariate case and relate training multivariate single-hidden layer networks to a variational problem over a Banach space. Our result there is more restrictive in that the only admissible activation functions are power activation functions where $\gamma$ is a postive even integer, and also does not make any connections to splines.
\end{remark}


Remarkably, as noticed in~\cite{ridge-splines}, is that the regularizer as in \cref{eq:power-path-norm} is related to the well-known weight-decay regularizer~\cite{weight-decay}.
\begin{proposition}[Special case of Proposition~2.13 of~{\cite{ridge-splines}}] \label{prop:equiv-opts}
    Consider training neural networks as in \cref{eq:nn} with an admissible activation function of order $\gamma$. Then, the following optimization problems are equivalent:
    \[
        \min_{\vec{\theta} \in \Theta} \: \sum_{k=1}^K \abs{v_k} \abs{w_k}^{\gamma-1} \quad\subj\quad f_\vec{\theta}(x_n) = y_n, \: n = 1, \ldots, N
    \]
    \[
        \min_{\vec{\theta} \in \Theta} \: \frac{1}{2}\sum_{k=1}^K \abs{v_k}^2 + \abs{w_k}^{2\gamma-2} \quad\subj\quad f_\vec{\theta}(x_n) = y_n, \: n = 1, \ldots, N
    \]
\end{proposition}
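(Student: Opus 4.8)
The plan is to exploit the \emph{positive homogeneity} of the admissible activation together with the arithmetic--geometric mean (AM--GM) inequality, reducing the claim to a per-neuron rescaling argument. By \cref{thm:power-necessary} the admissible activation is a power activation $\rho_{\alpha,\beta,\gamma}$, which is positively homogeneous of degree $\gamma - 1$: for every $t > 0$ one has $\rho(tx) = t^{\gamma-1}\rho(x)$, since each branch of \cref{eq:power-activation} is a multiple of $x^{\gamma-1}$. Consequently $\rho(t_k(w_k x - b_k)) = t_k^{\gamma-1}\rho(w_k x - b_k)$, and the per-neuron reparametrization $(v_k, w_k, b_k) \mapsto (t_k^{-(\gamma-1)} v_k,\, t_k w_k,\, t_k b_k)$ with $t_k > 0$ leaves each summand $v_k \rho(w_k x - b_k)$, and hence the whole function $f_\vec{\theta}$, unchanged. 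In particular this map sends $\Theta$ into $\Theta$ and preserves the interpolation constraints $f_\vec{\theta}(x_n) = y_n$, so both problems are optimized over the same feasible set modulo this scaling symmetry.

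Second, I would compute how the two regularizers transform under this symmetry. The path-norm term is scale invariant, since $\abs{t_k^{-(\gamma-1)} v_k}\,\abs{t_k w_k}^{\gamma-1} = \abs{v_k}\abs{w_k}^{\gamma-1}$. The weight-decay term is not: writing $a = \abs{v_k}$, $b = \abs{w_k}^{\gamma-1}$, and $s = t_k^{\gamma-1}$, its $k$-th summand becomes $\tfrac12(a^2 s^{-2} + b^2 s^2)$. By AM--GM this is bounded below by $ab = \abs{v_k}\abs{w_k}^{\gamma-1}$, with equality exactly when $s^2 = a/b$. Thus at \emph{every} common feasible point the quadratic objective dominates the path-norm objective term by term, which gives immediately that the optimal value of the weight-decay problem is at least that of the path-norm problem.

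Third, for the reverse inequality I would take any feasible $\vec{\theta}$ for the path-norm problem and, for each neuron, choose the balancing scale $t_k$ realizing AM--GM equality. This produces a feasible $\vec{\theta}'$ representing the \emph{same} function---hence satisfying the same constraints---whose weight-decay objective equals the path-norm objective of $\vec{\theta}$. Applying this to a minimizer of the path-norm problem shows the weight-decay optimum is no larger, so the two optimal values coincide and their minimizers correspond to one another under the rescaling; this is the precise sense in which the two problems are \emph{equivalent}.

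The main obstacle is the treatment of \emph{degenerate neurons}. The balancing scale with $s^2 = a/b$ is well defined only when $a = \abs{v_k} > 0$ and $b = \abs{w_k}^{\gamma-1} > 0$; the standing assumption $w_k \neq 0$ takes care of $b$, but a neuron with $v_k = 0$ contributes nothing to $f_\vec{\theta}$ and must be pruned (or driven to the infimum) before balancing, and one must check that pruning such neurons changes neither objective. One should also flag that $\gamma = 1$ is genuinely degenerate, since then $\rho$ is homogeneous of degree $0$ and the scaling symmetry is trivial, so the equivalence is of interest precisely when $\gamma > 1$. Apart from these boundary cases, the entire argument rests on the one-line identity $\tfrac12(a^2 + b^2) \ge ab$, so I expect the bookkeeping around degenerate neurons to be the only real subtlety.
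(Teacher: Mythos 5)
Your proof is correct, and it is essentially the paper's own argument: the paper proves nothing inline, deferring to Proposition~2.13 of~\cite{ridge-splines}, whose proof is exactly your rescaling symmetry $(v_k, w_k, b_k) \mapsto (t_k^{-(\gamma-1)} v_k,\, t_k w_k,\, t_k b_k)$ combined with the AM--GM balancing $\tfrac12(a^2 + b^2) \ge ab$. Your explicit attention to the degenerate $v_k = 0$ neurons and the trivial $\gamma = 1$ case is a reasonable extra precaution, not a departure from that approach.
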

\begin{remark}
    These optimizations are also equivalent in the case of regularized problems similar to \cref{eq:regularized}.
\end{remark}

\begin{remark}
    When $\gamma = 2$, the second optimization in \cref{prop:equiv-opts} is exactly the well-known weight decay regularizer. Thus, ReLU networks and leaky ReLU networks are \emph{intrinsically tied} to the well-known weight decay regularizer.
\end{remark}
\section{Empirical Validation} \label{sec:empirical}
In this section we verify empirically that the claims made in \cref{sec:main} hold. We use \cref{prop:equiv-opts} and consider regularized neural network training problems of the form
\begin{equation}
    \min_{\vec{\theta} \in \Theta} \: \sum_{n=1}^N \abs{y_n - f_\vec{\theta}(x_n)}^2 + \frac{\lambda}{2} \sum_{k=1}^K \abs{v_k}^2 + \abs{w_k}^{2\gamma - 2}.
    \label{eq:training}
\end{equation}
To promote interpolation of the data we take $\lambda = 10^{-5}$. We specifically consider the ReLU activation which is a power activation function with $(\alpha, \beta, \gamma) = (0,1,2)$ and the cubic truncated power activation which is a power activation function with $(\alpha, \beta, \gamma) = (0, 1, 4)$. PyTorch was used to implement the networks and AdaGrad~\cite{adagrad} to train the networks.

\begin{figure}[htb]
\begin{minipage}[b]{0.48\linewidth}
  \centering
  \centerline{\includegraphics[width=4.0cm]{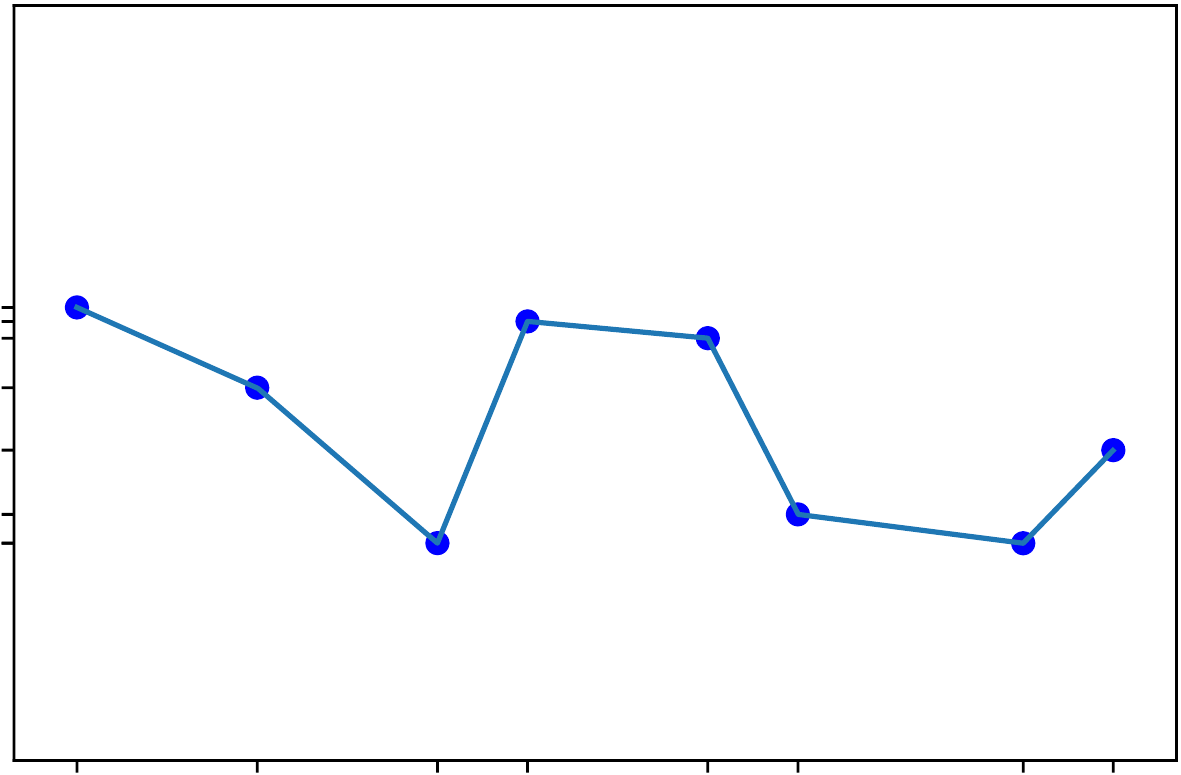}}
  (a) Standard linear spline \\
  $\norm{\D^2 f}_{\M(\R)} = 22.3$\smallskip
\end{minipage}
\hfill
\begin{minipage}[b]{0.48\linewidth}
  \centering
  \centerline{\includegraphics[width=4.0cm]{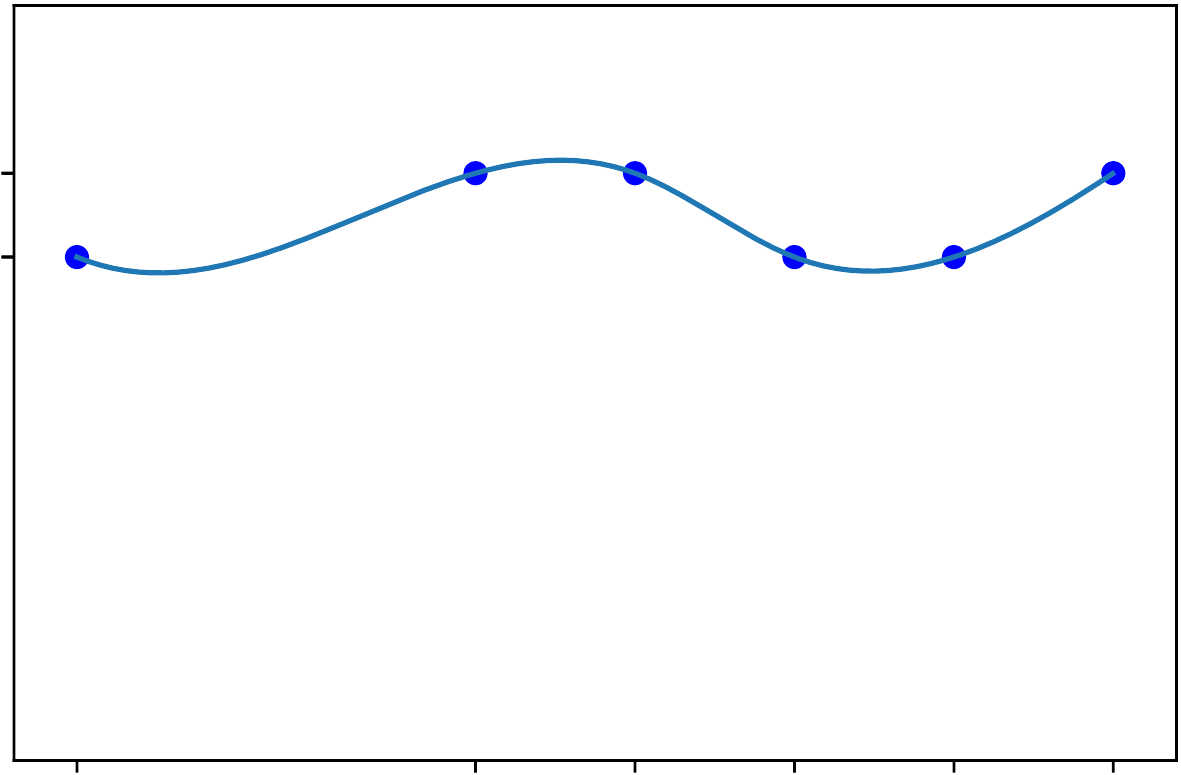}}
  (d) Standard cubic spline\\
  $\norm{\D^4 f}_{\M(\R)} = 5.8$\smallskip
\end{minipage}
\hfill
\begin{minipage}[b]{0.48\linewidth}
  \centering
  \centerline{\includegraphics[width=4.0cm]{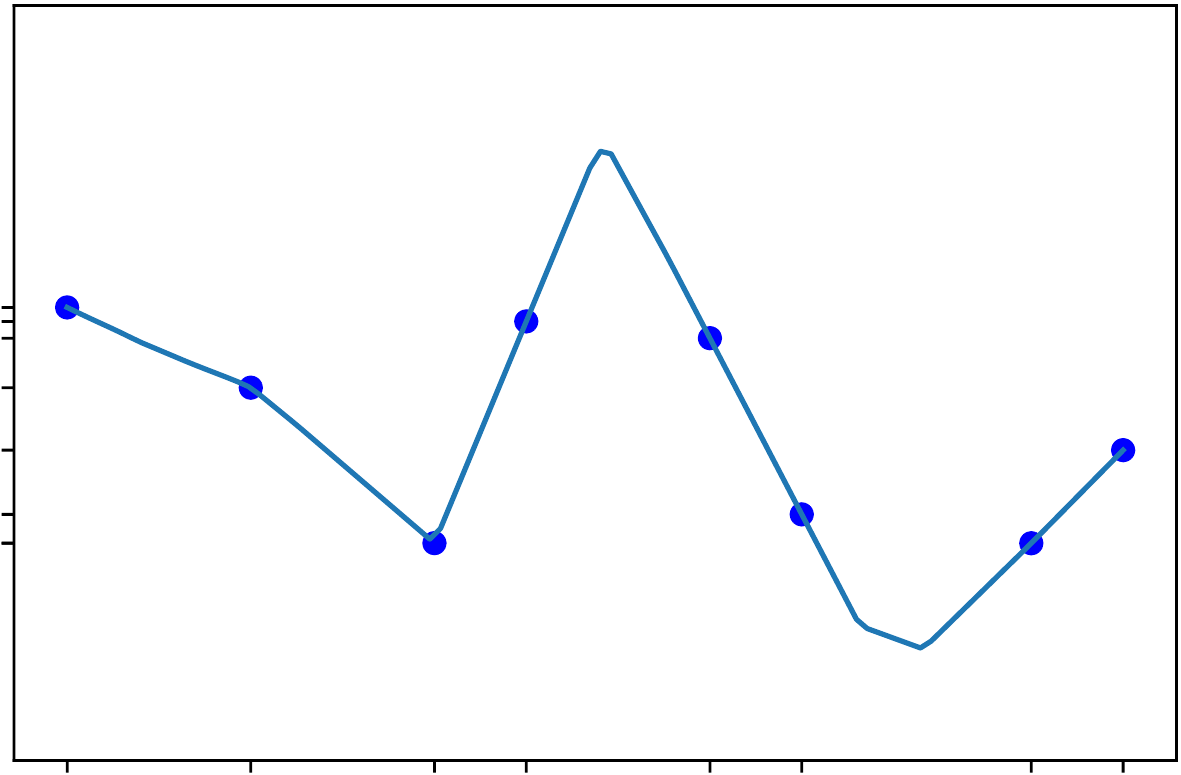}}
  (b) Neural network \\
  with regularization \\
  $\norm{\D^2 f_\vec{\theta}}_{\M(\R)} = 22.3$\smallskip
\end{minipage}
\hfill
\begin{minipage}[b]{0.48\linewidth}
  \centering
  \centerline{\includegraphics[width=4.0cm]{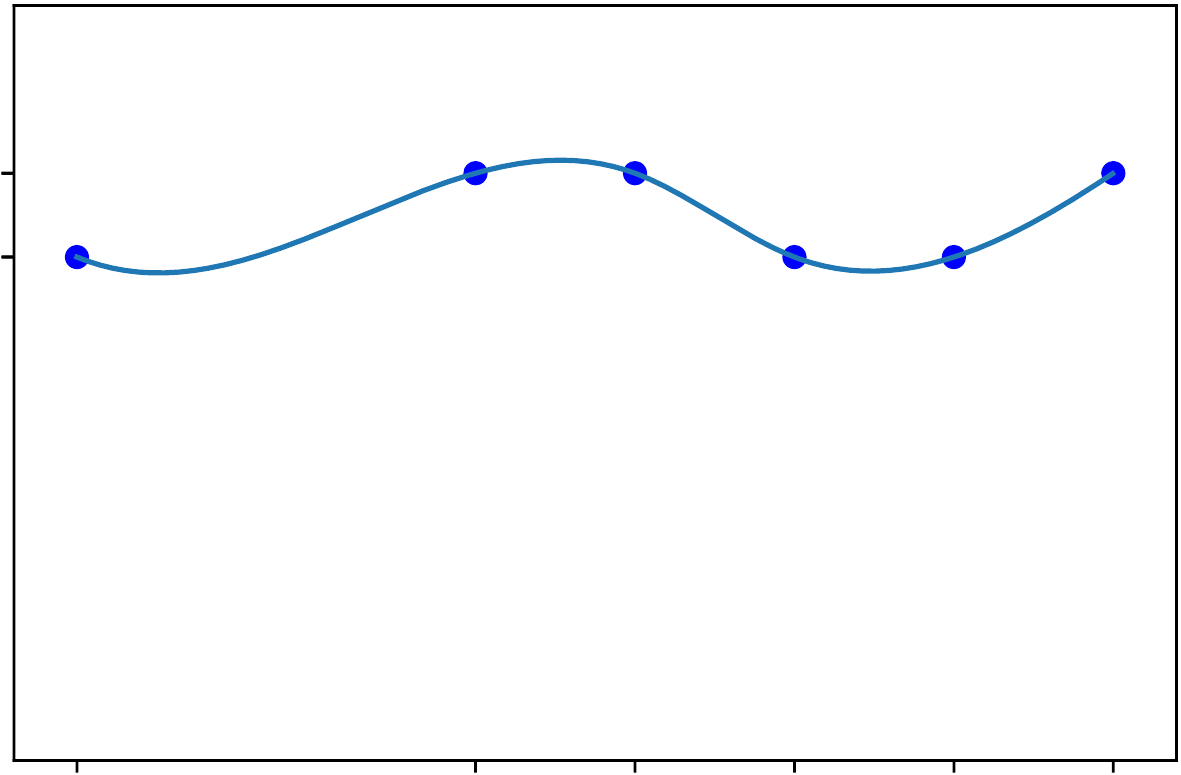}}
  (e) Neural network \\
  with regularization
  $\norm{\D^4 f_\vec{\theta}}_{\M(\R)} = 5.8$\smallskip
\end{minipage}
\hfill
\begin{minipage}[b]{0.48\linewidth}
  \centering
  \centerline{\includegraphics[width=4.0cm]{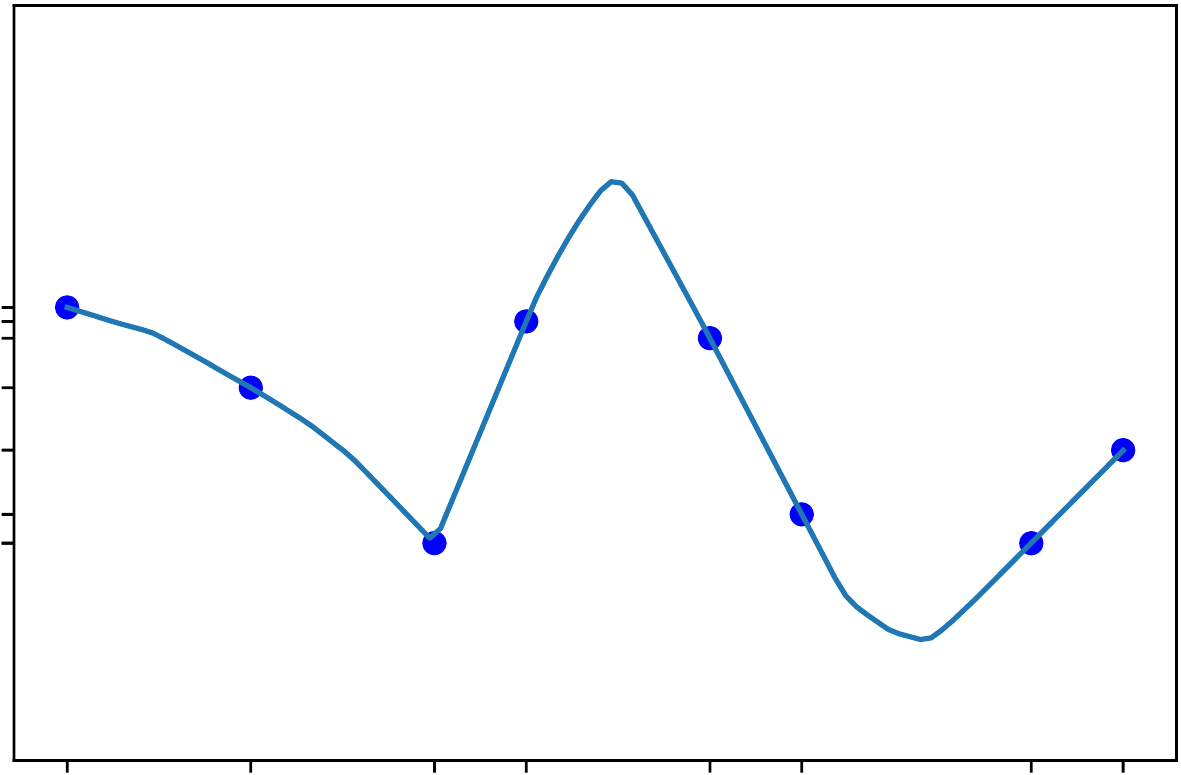}}
  (c) Neural network \\
  without regularization
  $\norm{\D^2 f_\vec{\theta}}_{\M(\R)} = 25.7$
\end{minipage}
\hfill
\begin{minipage}[b]{0.48\linewidth}
  \centering
  \centerline{\includegraphics[width=4.0cm]{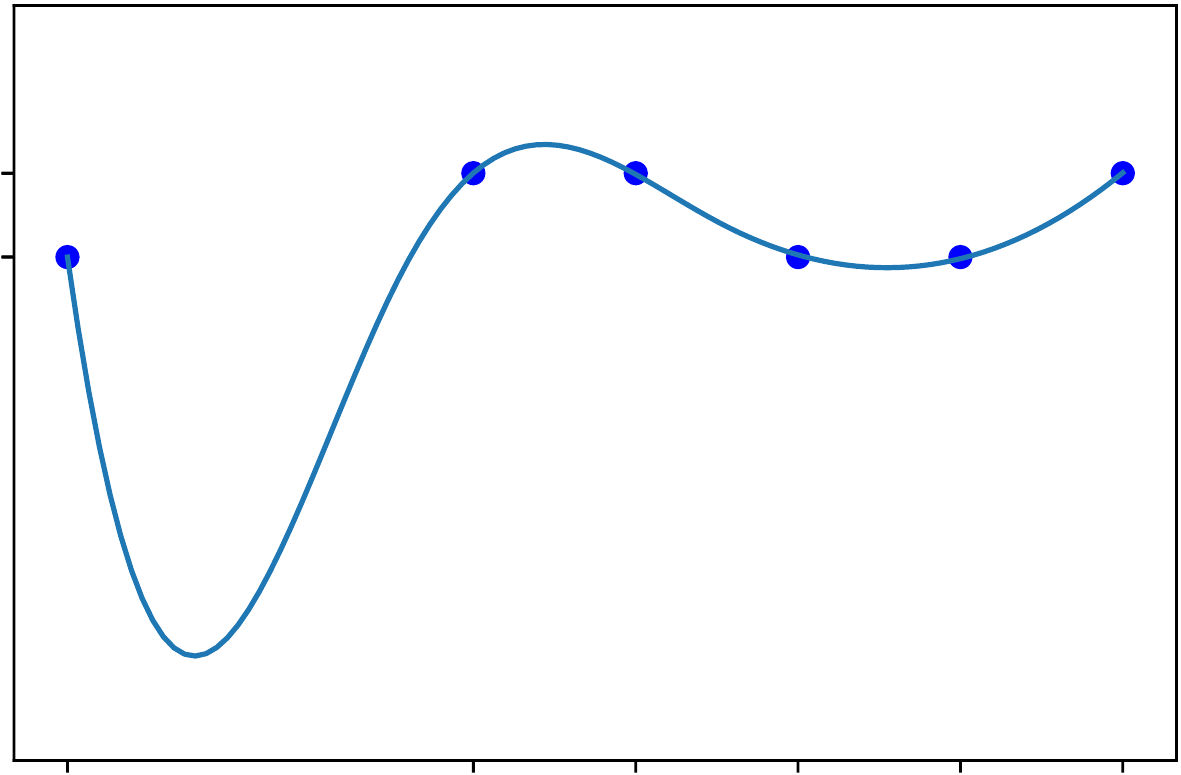}}
  (f) Neural network \\
  without regularization
  $\norm{\D^4 f_\vec{\theta}}_{\M(\R)} = 10.0$
\end{minipage}
\caption{In (a) (resp. (c)) we have the standard linear  (resp. cubic) spline of the data. In (b) (resp. (e)) we have a ReLU (resp. cubic truncated power function) network with $K = 200$ neurons trained with regularization according to \cref{eq:training}. In (c) (resp. (f)) we illustrate the importance of regularization. All figures plot the function (spline or neural network) vs. the input. The dots are the data.}
\vspace{-1.5em}
\label{fig:experiment}
\end{figure}

In \cref{fig:experiment}, we trained a width $K = 200$ ReLU network according to \cref{eq:training} ($\gamma = 2$) and a width $K = 200$ cubic truncated power function network according to \cref{eq:training} ($\gamma = 4$). The choice of $K = 200$ was chosen so that the networks are sufficiently wide according to \cref{thm:main}. We compare the learned functions to the standard linear and cubic splines\footnote{The standard splines were computed using SciPy.}. We also illustrate the importance of regularization by also training the networks without regularization and show that they do not learn the optimal spline interpolations of the data. Indeed, we see in \cref{fig:experiment}(c) that there are extra ``bumps'' between the first and second data point and between the second and third data point, and we see in \cref{fig:experiment}(f) that there is an extra ``bump'' between the first and second data point. While the function learned in \cref{fig:experiment}(b) is not the connect-the-dots linear spline, we see that it has the same second-order total variation and is hence a minimizer to the variational problem.

\section{Conclusion \& Future Work} \label{sec:conclusion}

Using tools from the variational framework of $\Ell$-splines, we have shown that the choice of activation implicitly defines a neural network regularizer that corresponds to a seminorm that defines a Banach space. We showed that the resulting neural network regularizers are related to the well-known path-norm and weight decay regularizers. Finally, we verified our results with empirical validation by showing that trained neural networks are optimal splines fit to data. Understanding the functional characteristics of deep neural networks trained on data is an open question.

\section*{Acknowledgment}
The authors would like to thank Jordan Ellenberg for suggesting the simple argument that appears in \cref{thm:power-necessary}.

\bibliographystyle{IEEEtran}
\bibliography{ref}

\begin{thebibliography}{10}
\providecommand{\url}[1]{#1}
\csname url@samestyle\endcsname
\providecommand{\newblock}{\relax}
\providecommand{\bibinfo}[2]{#2}
\providecommand{\BIBentrySTDinterwordspacing}{\spaceskip=0pt\relax}
\providecommand{\BIBentryALTinterwordstretchfactor}{4}
\providecommand{\BIBentryALTinterwordspacing}{\spaceskip=\fontdimen2\font plus
\BIBentryALTinterwordstretchfactor\fontdimen3\font minus
  \fontdimen4\font\relax}
\providecommand{\BIBforeignlanguage}[2]{{%
\expandafter\ifx\csname l@#1\endcsname\relax
\typeout{** WARNING: IEEEtran.bst: No hyphenation pattern has been}%
\typeout{** loaded for the language `#1'. Using the pattern for}%
\typeout{** the default language instead.}%
\else
\language=\csname l@#1\endcsname
\fi
#2}}
\providecommand{\BIBdecl}{\relax}
\BIBdecl

\bibitem{uat1}
G.~Cybenko, ``Approximation by superpositions of a sigmoidal function,''
  \emph{Mathematics of control, signals and systems}, vol.~2, no.~4, pp.
  303--314, 1989.

\bibitem{uat2}
K.~Hornik, M.~Stinchcombe, and H.~White, ``Multilayer feedforward networks are
  universal approximators,'' \emph{Neural networks}, vol.~2, no.~5, pp.
  359--366, 1989.

\bibitem{uat3}
K.-I. Funahashi, ``On the approximate realization of continuous mappings by
  neural networks,'' \emph{Neural networks}, vol.~2, no.~3, pp. 183--192, 1989.

\bibitem{uat4}
A.~R. Barron, ``Universal approximation bounds for superpositions of a
  sigmoidal function,'' \emph{IEEE Transactions on Information theory},
  vol.~39, no.~3, pp. 930--945, 1993.

\bibitem{nn-approx-non-poly}
M.~Leshno, V.~Y. Lin, A.~Pinkus, and S.~Schocken, ``Multilayer feedforward
  networks with a nonpolynomial activation function can approximate any
  function,'' \emph{Neural networks}, vol.~6, no.~6, pp. 861--867, 1993.

\bibitem{approx-relu-relu-squared}
J.~M. Klusowski and A.~R. Barron, ``Approximation by combinations of {ReLU} and
  squared {ReLU} ridge functions with {$\ell^1$} and {$\ell^0$} controls,''
  \emph{IEEE Transactions on Information Theory}, vol.~64, no.~12, pp.
  7649--7656, 2018.

\bibitem{dimension-independent-approx-bounds}
H.~N. Mhaskar, ``Dimension independent bounds for general shallow networks,''
  \emph{Neural Networks}, vol. 123, pp. 142--152, 2020.

\bibitem{weight-decay}
A.~Krogh and J.~A. Hertz, ``A simple weight decay can improve generalization,''
  in \emph{Advances in neural information processing systems}, 1992, pp.
  950--957.

\bibitem{moody1992effective}
J.~E. Moody, ``The effective number of parameters: An analysis of
  generalization and regularization in nonlinear learning systems,'' in
  \emph{Advances in neural information processing systems}, 1992, pp. 847--854.

\bibitem{generalization-deep-learning}
B.~Neyshabur, S.~Bhojanapalli, D.~Mcallester, and N.~Srebro, ``Exploring
  generalization in deep learning,'' in \emph{Advances in Neural Information
  Processing Systems}, 2017, pp. 5947--5956.

\bibitem{regularization-matters}
C.~Wei, J.~D. Lee, Q.~Liu, and T.~Ma, ``Regularization matters: Generalization
  and optimization of neural nets vs their induced kernel,'' in \emph{Advances
  in Neural Information Processing Systems}, 2019, pp. 9709--9721.

\bibitem{L-splines}
M.~Unser, J.~Fageot, and J.~P. Ward, ``Splines are universal solutions of
  linear inverse problems with generalized {TV} regularization,'' \emph{{SIAM}
  Review}, vol.~59, no.~4, pp. 769--793, 2017.

\bibitem{leaky-relu}
A.~L. Maas, A.~Y. Hannun, and A.~Y. Ng, ``Rectifier nonlinearities improve
  neural network acoustic models,'' in \emph{ICML Workshop on Deep Learning for
  Audio, Speech and Language Processing}, 2013.

\bibitem{path-norm}
B.~Neyshabur, R.~R. Salakhutdinov, and N.~Srebro, ``Path-{SGD}: Path-normalized
  optimization in deep neural networks,'' in \emph{Advances in Neural
  Information Processing Systems}, 2015, pp. 2422--2430.

\bibitem{skip-connections}
K.~He, X.~Zhang, S.~Ren, and J.~Sun, ``Deep residual learning for image
  recognition,'' in \emph{The IEEE Conference on Computer Vision and Pattern
  Recognition (CVPR)}, June 2016.

\bibitem{inv-prob1}
H.~Gupta, J.~Fageot, and M.~Unser, ``Continuous-domain solutions of linear
  inverse problems with {T}ikhonov versus generalized {TV} regularization,''
  \emph{IEEE Transactions on Signal Processing}, vol.~66, no.~17, pp.
  4670--4684, 2018.

\bibitem{inv-prob2}
T.~Debarre, J.~Fageot, H.~Gupta, and M.~Unser, ``B-spline-based exact
  discretization of continuous-domain inverse problems with generalized {TV}
  regularization,'' \emph{IEEE Transactions on Information Theory}, 2019.

\bibitem{relu-sparse}
X.~Glorot, A.~Bordes, and Y.~Bengio, ``Deep sparse rectifier neural networks,''
  in \emph{Proceedings of the fourteenth international conference on artificial
  intelligence and statistics}, 2011, pp. 315--323.

\bibitem{relu-fast}
Y.~LeCun, Y.~Bengio, and G.~Hinton, ``Deep learning,'' \emph{Nature}, vol. 521,
  no. 7553, pp. 436--444, 2015.

\bibitem{relu-linear-spline}
P.~H.~P. Savarese, I.~Evron, D.~Soudry, and N.~Srebro, ``How do infinite width
  bounded norm networks look in function space?'' in \emph{Conference on
  Learning Theory, {COLT} 2019, 25-28 June 2019, Phoenix, AZ, {USA}}, 2019, pp.
  2667--2690.

\bibitem{representer-deep}
M.~Unser, ``A representer theorem for deep neural networks,'' \emph{Journal of
  Machine Learning Research}, vol.~20, no. 110, pp. 1--30, 2019.

\bibitem{aziznejad2019deep}
S.~Aziznejad and M.~Unser, ``Deep spline networks with control of {L}ipschitz
  regularity,'' in \emph{ICASSP 2019-2019 IEEE International Conference on
  Acoustics, Speech and Signal Processing (ICASSP)}.\hskip 1em plus 0.5em minus
  0.4em\relax IEEE, 2019, pp. 3242--3246.

\bibitem{ridge-splines}
R.~Parhi and R.~D. Nowak, ``Neural networks, ridge splines, and {TV}
  regularization in the {R}adon domain,'' \emph{arXiv preprint
  arXiv:2006.05626v1}, 2020.

\bibitem{fractional-splines}
M.~Unser and T.~Blu, ``Fractional splines and wavelets,'' \emph{SIAM review},
  vol.~42, no.~1, pp. 43--67, 2000.

\bibitem{adagrad}
J.~Duchi, E.~Hazan, and Y.~Singer, ``Adaptive subgradient methods for online
  learning and stochastic optimization,'' \emph{Journal of Machine Learning
  Research}, vol.~12, no. Jul, pp. 2121--2159, 2011.

\end{thebibliography}

\end{document}